\newtheorem{theorem}{Theorem}
\newtheorem{assumption}{Assumption}
\newtheorem{definition}{Definition}
\newtheorem{lemma}{Lemma}
\newtheorem{proposition}{Proposition}
\newtheorem{property}{Property}
\newtheorem*{remark}{Remark}
\newcommand{\param}{\beta}
\newcommand{\s}{\mathcal{S}}
\newcommand{\p}{\mathcal{P}}
\newcommand{\pol}{\pi}
\newcommand{\A}{\mathcal{A}}
\newcommand{\R}{\mathbb{R}}
\newcommand{\T}{\mathcal{T}}
\newcommand{\V}{v}
\newcommand{\Vr}{\V_{\param}}
\newcommand{\regT}{\T^{\pol}_{\param}}
\newcommand{\expect}{\mathop{\mathbb{E}}}
\newcommand{\RE}{r}
\title{Temporal Regularization in Markov Decision Process}
\author{
  Pierre Thodoroff \\
  McGill University\\
  \texttt{pierre.thodoroff@mail.mcgill.ca} \\
   \And
   Audrey Durand \\
   McGill University \\
   \texttt{audrey.durand@mcgill.ca } \\
   \AND
   Joelle Pineau \\
   McGill University \& Facebook AI Research\\
   \texttt{jpineau@cs.mcgill.ca} \\
   \And
   Doina Precup \\
   McGill University \\
   \texttt{dprecup@cs.mcgill.ca} \\
}
\begin{document}
% \nipsfinalcopy is no longer used

\maketitle
\begin{abstract}
Several applications of Reinforcement Learning suffer from instability due to high variance. This is especially prevalent in high dimensional domains. Regularization is a commonly used technique in machine learning to reduce variance, at the cost of introducing some bias. Most existing regularization techniques focus on spatial (perceptual) regularization.  Yet in reinforcement learning, due to the nature of the Bellman equation, there is an opportunity to also exploit temporal regularization based on smoothness in value estimates over trajectories.  This paper explores a class of methods for temporal regularization. We formally characterize the bias induced by this technique using Markov chain concepts. We illustrate the various characteristics of temporal regularization via a sequence of simple discrete and continuous MDPs, and show that the technique provides improvement even in high-dimensional Atari games.

\end{abstract}

\section{Introduction}
There has been much progress in Reinforcement Learning (RL) techniques, with some impressive success with games~\citep{silver16}, and several interesting applications on the horizon~\citep{koedinger18,shortreed11,prasad17,dhingra17}. However RL methods are too often hampered by high variance, whether due to randomness in data collection, effects of initial conditions, complexity of learner function class, hyper-parameter configuration, or sparsity of the reward signal~\citep{henderson18}.
Regularization is a commonly used technique in machine learning to reduce variance, at the cost of introducing some (smaller) bias.  Regularization typically takes the form of smoothing over the observation space to reduce the complexity of the learner's hypothesis class.

In the RL setting, we have an interesting opportunity to consider an alternative form of regularization, namely \emph{temporal regularization}.  Effectively, temporal regularization considers smoothing over the trajectory, whereby the estimate of the value function at one state is assumed to be related to the value function at the state(s) that typically occur before it in the trajectory. This structure arises naturally out of the fact that the value at each state is estimated using the Bellman equation. The standard Bellman equation clearly defines the dependency between value estimates. In temporal regularization, we amplify this dependency by making each state depend more strongly on estimates of \emph{previous} states as opposed to multi-step methods that considers future states. 

This paper proposes a class of temporally regularized value function estimates. We discuss properties of these estimates, based on notions from Markov chains, under the policy evaluation setting, and extend the notion to the control case.
Our experiments show that temporal regularization effectively reduces variance and estimation error in discrete and continuous MDPs.  The experiments also highlight that regularizing in the time domain rather than in the spatial domain allows more robustness to cases where state features are mispecified or noisy, as is the case in some Atari games.

\section{Related work}
Regularization in RL has been considered via several different perspectives. One line of investigation focuses on regularizing the features learned on the state space \citep{massoud2009regularized,petrik2010feature,pazis2011non,farahmand2011regularization,liu2012regularized,harrigan2016deep}. In particular backward bootstrapping method's can be seen as regularizing in feature space based on temporal proximity \cite{sutton2009fast,li2008worst,baird1995residual}.  These approaches assume that nearby states in the state space have similar value. Other works focus on regularizing the changes in policy directly. Those approaches are often based on entropy methods \citep{neu2017unified,schulman2017proximal,bartlett2009regal}.
Explicit regularization in the temporal space has received much less attention.
Temporal regularization in some sense may be seen as a ``backward'' multi-step method \citep{sutton1998reinforcement}.  The closest work to ours is possibly \citep{xu2017natural}, where they define natural value approximator by projecting the previous states estimates by adjusting for the reward and $\gamma$. Their formulation, while sharing similarity in motivation, leads to different theory and algorithm.
Convergence properties and bias induced by this class of methods were also not analyzed in \citet{xu2017natural}.
\section{Technical Background}

\subsection{Markov chains}
We begin by introducing discrete Markov chains concepts that will be used to study the properties of temporally regularized MDPs. A discrete-time Markov chain \citep{levin2017markov} is defined by a discrete set of states $\s$ and a transition function $\p : \s \times \s \mapsto [0,1]$ which can also be written in matrix form as $P_{ij} = \p(i|j)$.
Throughout the paper, we make the following mild assumption on the Markov chain:
\begin{assumption}
The Markov chain P is ergodic: P has a unique stationary distribution $\mu$.
\end{assumption}
In Markov chains theory, one of the main challenge is to study the mixing time of the chain~\citep{levin2017markov}. Several results have been obtained when the chain is called reversible, that is when it satisfies detailed balance.
\begin{definition}[Detailed balance~\citep{kemeny1976finite}]
Let $P$ be an irreducible Markov chain with invariant stationary distribution $\mu$\footnote{$\mu_i$ defines the $i$th element of $\mu$}. A chain is said to satisfy detailed balance if and only if
\begin{equation}
    \mu_i P_{ij} = \mu_j P_{ji} \qquad \forall i,j \in \s.
\end{equation}
\end{definition}
Intuitively this means that if we start the chain in a stationary distribution, the amount of probability that flows from $i$ to $j$ is equal to the one from $j$ to $i$. In other words, the system must be at equilibrium. An intuitive example of a physical system not satisfying detailed balance is a snow flake in a coffee.
%COMMENT(pierre-luc)(snow flake not clear). 
Indeed, many chains do not satisfy this detailed balance property. In this case it is possible to use a different, but related, chain called the reversal Markov chain to infer mixing time bounds~\citep{chung2012chernoff}.
%COMMENT(pierre-luc): for these two definitions, you can cite Kemeny & Snell Finite Markov Chains (1976) 
\begin{definition}[Reversal Markov chain~\citep{kemeny1976finite}]
Let $\widetilde{P}$ the reversal Markov chain of $P$ be defined as:
\begin{equation}
    \widetilde{P_{ij}} = \frac{\mu_j P_{ji}}{\mu_i} \qquad \forall i,j \in \s.
\end{equation}
If $P$ is irreducible with invariant distribution $\mu$, then $\widetilde{P}$ is also irreducible with invariant distribution $\mu$.
\end{definition}

The reversal Markov chain $\widetilde{P}$ can be interpreted as the Markov chain $P$ with time running backwards. If the chain is reversible, then $P=\widetilde{P}$.

\subsection{Markov Decision Process}
%COMMENT(pierre-luc): you use \P for both P(s'|s) and P(s'|s,a). Do you need a special notation ? Just "P" would be sufficient IMO.
A Markov Decision Process (MDP), as defined in \cite{puterman2014markov}, consists of a discrete set of states $\s$, a transition function $\p : \s \times \A \times \s \mapsto [0,1]$, and a reward function $\RE : \s \times \A \mapsto \R$. On each round $t$, the learner observes current state $s_t\in\s$ and selects action $a_t\in\A$, after which it receives reward $r_t = \RE(s_t,a_t)$ and moves to new state $s_{t+1}\sim \p(\cdot|s_t, a_t)$. We define a stationary policy $\pol$ as a probability distribution over actions conditioned on states $\pol : \s \times \A \mapsto [0,1]$.

\subsubsection{Discounted Markov Decision Process}
%COMMENT(pierre-luc) : I would merge this section with the above. Both sections are pretty small and there is no need to separate them as subsubsections.
When performing policy evaluation in the discounted case, the goal is to estimate the discounted expected return of policy $\pol$ at a state $s\in\s$,  $\V^{\pol}(s) = \expect_{\pol}[\sum_{t=0}^{\infty} \gamma^t \RE_{t+1} | s_0 = s]$, with discount factor $\gamma \in [0,1)$. 
This $\V^{\pol}$ can be obtained as the fixed point of the Bellman operator $\T^{\pol}$ such that:
\begin{equation}
    \T^{\pol} v^{\pol} = r^{\pol} + \gamma P^{\pol}v^{\pol},
\end{equation}
where $P^{\pol}$ denotes the $|\s|\times |\s|$ transition matrix under policy $\pol$, $v^\pi$ is the state values column-vector, and $r$ is the reward column-vector. The matrix $P^{\pol}$ also defines a Markov chain.

In the control case, the goal is to find the optimal policy $\pol^*$ that maximizes the discounted expected return. Under the optimal policy, the optimal value function $\V^*$ is the fixed point of the non-linear optimal Bellman operator:
\begin{equation}
    \T^* v^* = \max_{a \in \A} [r(a) + \gamma P(a)v^*].
\end{equation}
%COMMENT(pierre-luc)(not necessary to say non-linear)

\section{Temporal regularization}
\label{sec:temp_reg}
%COMMENT(pierre-luc): "spatial" is not common terminology perhaps better to talk about regularization in feature space/state space, as in state abstraction vs temporal abstraction. Citations are also needed. Temporal regularization : could perhaps think of entropy-regularized work as a form of temporal regularization because it regularizes over pi_t and pi_{t+1}. You may also want to make it clear that "temporal regularization" is an expression coming from you, or otherwise provide citations. 
Regularization in the feature/state space, or \emph{spatial regularization} as we call it, exploits the regularities that exist in the observation (or state). In contrast, \emph{temporal regularization} considers the temporal structure of the value estimates through a trajectory. Practically this is done by smoothing the value estimate of a state using estimates of states that occurred earlier in the trajectory.
In this section we first introduce the concept of temporal regularization and discuss its properties in the policy evaluation setting. We then show how this concept can be extended to exploit information from the entire trajectory by casting temporal regularization as a time series prediction problem. 

Let us focus on the simplest case where the value estimate at the current state is regularized using only the value estimate at the previous state in the trajectory, yielding updates of the form: % COMMENT(pierre-luc) : this sentence could be clearer. 
% Attempt: As a first step, we consider the case where the expected discounted return at a state also depends on the value at the previous step. 
% COMMENT(pierre-luc): You also need some more motivation as to why this particular form, especially the time-reversal aspect. Otherwise, it seems to be taken totally out of the blue. Why time-reversal matrix and not just P ? 
% COMMENT(pierre-luc) : "As a first step" suggests that you will introduce other forms of this operator in the paper. You do present the "time series" generalization, but don't develop it much further. So I would get rid of "first step", and start directly by explaining why incorporating previous values might be a good idea. 
\begin{equation}
    \begin{split}
        \Vr(s_{t}) &= \expect_{s_{t+1},s_{t-1} \sim \pi} [r(s_t) + \gamma ((1-\param)\Vr(s_{t+1}) + \param \Vr(s_{t-1}))]\\
         &= r(s_t) + \gamma (1-\param)\sum_{s_{t+1}\in \s} p(s_{t+1}|s_t)\Vr(s_{t+1})
         + \gamma \param \sum_{s_{t-1} \in \s}\frac{p(s_t|s_{t-1}) p(s_{t-1})}{p(s_t)} \Vr(s_{t-1}),
    \end{split}
\end{equation}
for a parameter $\param \in [0,1]$ and $p(s_{t+1}|s_t)$ the transition probability induced by the policy $\pol$. It can be rewritten in matrix form as $\Vr = r + \gamma (((1-\param) P^{\pol} + \param \widetilde{P^{\pol}})\Vr) $, where $\widetilde{P^{\pol}}$ corresponds to the reversal Markov chain of the MDP.
We define a temporally regularized Bellman operator as:
\begin{equation}
\label{eqn:temp_reg_bellman_op}
    \regT v_{\param} = r + \gamma ((1-\param) P^{\pol}v_{\param} + \param \widetilde{P^{\pol}} v_{\param}).
\end{equation}
To alleviate the notation, we denote $P^{\pol}$ as $P$ and $\widetilde{P^{\pol}}$ as $\widetilde{P}$.
\begin{remark}
For $\param =0$, Eq.~\ref{eqn:temp_reg_bellman_op} corresponds to the original Bellman operator.
\end{remark}
We can prove that this operator has the following property.
%COMMENT(pierre-luc): I would just splash Theorem 1 without stating Lemma 1 (which is just one line and isn't proved). You can instead mention it in the proof, and give a reference for this fact.
% \begin{lemma}
% \label{lem:stochastic_matrix_convex_combination}
% The convex combination of two row stochastic matrices is also row stochastic.
% \end{lemma}
\begin{theorem}
The operator $\regT$ has a unique fixed point $\Vr^{\pol}$ and $\regT$ is a contraction mapping.
\end{theorem}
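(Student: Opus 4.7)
The plan is to show that $\regT$ is a $\gamma$-contraction in the supremum norm, after which the existence and uniqueness of the fixed point will follow from the Banach fixed-point theorem on the finite-dimensional (hence complete) space $\R^{|\s|}$.

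First I would check that the reversal matrix $\widetilde{P}$ is itself a stochastic matrix. Non-negativity of its entries is immediate from the definition, and the row sums evaluate to
\begin{equation*}
\sum_{j} \widetilde{P}_{ij} \;=\; \sum_{j} \frac{\mu_j P_{ji}}{\mu_i} \;=\; \frac{1}{\mu_i} \sum_{j} \mu_j P_{ji} \;=\; \frac{\mu_i}{\mu_i} \;=\; 1,
\end{equation*}
where the penultimate equality uses the stationarity $\mu^{\top} P = \mu^{\top}$ guaranteed by the ergodicity assumption (note that $\mu_i > 0$ for all $i$, so the quotient is well-defined). Consequently, for any $\param \in [0,1]$, the matrix $M_\param := (1-\param)P + \param \widetilde{P}$ is a convex combination of stochastic matrices and is therefore itself stochastic.

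Next, for any two value vectors $u, v \in \R^{|\s|}$, linearity of $\regT$ in its argument gives
\begin{equation*}
\regT u - \regT v \;=\; \gamma\, M_\param (u - v).
\end{equation*}
Since $M_\param$ is stochastic, each entry of $M_\param(u-v)$ is a convex combination of entries of $u-v$, so $\|M_\param(u-v)\|_\infty \le \|u-v\|_\infty$. Hence
\begin{equation*}
\|\regT u - \regT v\|_\infty \;\le\; \gamma \|u - v\|_\infty,
\end{equation*}
and since $\gamma \in [0,1)$, $\regT$ is a contraction in the $\ell_\infty$ norm. The Banach fixed-point theorem then yields a unique $\Vr^{\pol}$ satisfying $\regT \Vr^{\pol} = \Vr^{\pol}$.

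I do not anticipate a genuinely hard step; the only subtlety is ensuring $\widetilde{P}$ is well-defined and stochastic, which reduces to invoking Assumption 1 so that $\mu_i > 0$ and $\mu^\top P = \mu^\top$. Once that is established, the contraction argument is standard and mirrors the proof for the classical Bellman operator, with $P$ simply replaced by the convex combination $M_\param$.
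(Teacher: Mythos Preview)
Your proposal is correct and follows essentially the same approach as the paper: establish that $(1-\param)P + \param\widetilde{P}$ is row stochastic, deduce that $\regT$ is a $\gamma$-contraction in the $\ell_\infty$ norm, and invoke the Banach fixed-point theorem. If anything, you are slightly more explicit than the paper in verifying that $\widetilde{P}$ itself is stochastic (using $\mu^\top P = \mu^\top$ and $\mu_i>0$), whereas the paper simply cites the closure of stochastic matrices under convex combination.
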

\begin{proof}
We first prove that $\regT$ is a contraction mapping in $L_\infty$ norm. We have that
\begin{equation}
\begin{split}
    \norm{\regT u - \regT v}_{\infty} &= \norm{r+\gamma( (1-\param) Pu + \param \widetilde{P}u) - (r+\gamma( (1-\param) Pv + \param \widetilde{P}v))}_{\infty}\\
    &= \gamma \norm{((1-\param) P + \param \widetilde{P})(u-v)}_{\infty}\\
    &\leq \gamma \norm{u-v}_{\infty},
\end{split}
\end{equation}
where the last inequality uses the fact that the convex combination of two row stochastic matrices is also row stochastic (the proof can be found in the appendix).
%is obtained with Lemma~\ref{lem:stochastic_matrix_convex_combination}.
Then using Banach fixed point theorem, we obtain that $\Vr^{\pol}$ is a unique fixed point. 
\end{proof}
Furthermore the new induced Markov chain $(1-\param) P + \param \widetilde{P}$ has the same stationary distribution as the original $P$ (the proof can be found in the appendix).
\begin{lemma}
\label{cor:same_stationary_dist}
$P$ and $(1-\param) P + \param \widetilde{P}$ have the same stationary distribution $\mu \quad \forall \param \in [0,1]$.
\end{lemma}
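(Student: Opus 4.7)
The plan is to exploit the linearity of the stationarity equation, together with the fact — essentially built into the definition of the reversal Markov chain — that $\mu$ is invariant under $\widetilde{P}$ as well as under $P$.

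First, I would check $\mu^{T} \widetilde{P} = \mu^{T}$ directly. Substituting $\widetilde{P}_{ij} = \mu_{j} P_{ji}/\mu_{i}$ into the $j$th entry of $\mu^{T}\widetilde{P}$, the $\mu_{i}$ factors cancel and one is left with $\mu_{j}\sum_{i} P_{ji} = \mu_{j}$, since each row of $P$ sums to $1$. The reversal-chain definition of Section 3.1 already asserts this invariance, so this step is essentially free.

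Second, I would combine this with $\mu^{T} P = \mu^{T}$ (Assumption 1) by linearity of the vector-matrix product:
\begin{equation*}
\mu^{T}\bigl((1-\param) P + \param \widetilde{P}\bigr) = (1-\param)\mu^{T} P + \param\, \mu^{T}\widetilde{P} = (1-\param)\mu^{T} + \param\, \mu^{T} = \mu^{T},
\end{equation*}
which exhibits $\mu$ as a stationary distribution of the mixture chain $M_{\param} := (1-\param) P + \param \widetilde{P}$.

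The only remaining subtlety is uniqueness. For $\param \in [0,1)$ the mixture dominates $(1-\param)P$ entrywise, so irreducibility is inherited from $P$; for $\param = 1$ the mixture equals $\widetilde{P}$, which is itself irreducible by the reversal-chain definition. Hence $M_{\param}$ is irreducible for every $\param\in[0,1]$ and therefore admits a unique stationary distribution, which must be $\mu$. I do not expect any real obstacle — the whole argument is linear and follows directly from the setup in Section 3.1.
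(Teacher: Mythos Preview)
Your proof is correct and follows essentially the same argument as the paper: invoke that $\mu$ is stationary for both $P$ and $\widetilde{P}$, then apply linearity to conclude $\mu$ is stationary for the convex combination. Your treatment is in fact more careful than the paper's, which simply cites the invariance of $\mu$ under $\widetilde{P}$ as known (from Definition~2) and omits the uniqueness discussion entirely.
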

%COMMENT(pierre-luc): $P$ and $(1-\param) : order is confusing. I would write something like:
% The stationary distribution induced by $(1-\param) P + \param \widetilde{P}$ is $\mu$: the stationary distribution under the original Markov chain $P$.

%COMMENT(pierre-luc): You first need to start this paragraph by saying that there is indeed a bias, because this is not something that you would expect from a "regular" policy evaluation operator. Then quickly reinsure the reader that this bias is not evil because the resulting operator as some desirable properties : a knob for variance reduction. 
In the policy evaluation setting, the bias between the original value function $\V^{\pol}$ and the regularized one $\V^{\pol}_{\param}$ can be characterized as a function of the difference between $P$ and its Markov reversal $\widetilde{P}$, weighted by $\param$ and the reward distribution.

%COMMENT(pierre-luc): You need to say that these Neumann series converge. This follows from your contraction argument above. You can state Puterman corollary C.4 for the existence of the series and its inverse. 

\begin{proposition}
Let $v^{\pol} = \sum_{i=0}^{\infty} \gamma^i P^i r$ and $v^{\pol}_{\param} = \sum_{i=0}^{\infty} \gamma^i ((1-\param) P + \param \widetilde{P})^i r$. We have that
\begin{equation}
    \begin{split}
        \norm{v^{\pol} - v^{\pol}_{\param}}_{\infty} &= \norm{\sum_{i=0}^{\infty} \gamma^i (P^i-((1-\param) P + \param \widetilde{P})^i) r}_{\infty}
        \leq \sum_{i=0}^{\infty} \gamma^i \norm{ (P^i-((1-\param) P + \param \widetilde{P})^i) r}_{\infty}.
    \end{split}
\end{equation}
This quantity is naturally bounded for $\gamma < 1$.
\end{proposition}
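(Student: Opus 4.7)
The plan is to establish this Proposition in three short steps: (i) justify the two Neumann series representations of $v^{\pol}$ and $v^{\pol}_{\param}$ that are asserted in the statement, (ii) subtract them and apply the triangle inequality for the $L_\infty$ norm to obtain the displayed bound, and (iii) observe geometric convergence to conclude the quantity is finite whenever $\gamma<1$.

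First I would recall that the standard Bellman operator $\T^{\pol}$ is a $\gamma$-contraction in $L_\infty$, so by Banach's fixed point theorem its unique fixed point admits the Neumann expansion
\begin{equation*}
v^{\pol} = (I-\gamma P)^{-1} r = \sum_{i=0}^{\infty} \gamma^i P^i r,
\end{equation*}
with the series converging in $L_\infty$. By the preceding Theorem, $\regT$ is also a $\gamma$-contraction, and its induced transition matrix $M := (1-\param) P + \param \widetilde{P}$ is row-stochastic (exactly the fact invoked in that contraction proof). The same Neumann-series expansion therefore gives $v^{\pol}_{\param} = (I-\gamma M)^{-1} r = \sum_{i=0}^{\infty} \gamma^i M^i r$, matching the formula stated in the Proposition.

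Next I would subtract the two series term by term and apply the triangle inequality for $\norm{\cdot}_\infty$ to the infinite sum, which is legitimate because both series converge absolutely in $L_\infty$. This yields exactly the displayed identity and bound
\begin{equation*}
\norm{v^{\pol}-v^{\pol}_{\param}}_\infty
= \norm[\Big]{\sum_{i=0}^{\infty} \gamma^i (P^i - M^i) r}_\infty
\leq \sum_{i=0}^\infty \gamma^i \norm{(P^i - M^i)r}_\infty.
\end{equation*}

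Finally, for the boundedness claim, I would use that $P^i$ and $M^i$ are row-stochastic for every $i \geq 0$, since powers of row-stochastic matrices remain row-stochastic. Hence $\norm{P^i r}_\infty \leq \norm{r}_\infty$ and $\norm{M^i r}_\infty \leq \norm{r}_\infty$, so $\norm{(P^i - M^i)r}_\infty \leq 2\norm{r}_\infty$, and summing the resulting geometric series yields the uniform bound $\tfrac{2\norm{r}_\infty}{1-\gamma}$. There is no real obstacle in this argument; the only minor subtlety is the legitimacy of swapping the norm with the infinite sum, which is justified precisely because $\gamma<1$ guarantees absolute convergence in the Banach space $(\R^{|\s|},\norm{\cdot}_\infty)$.
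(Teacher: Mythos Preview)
Your proposal is correct and follows exactly the approach implicit in the paper: the Proposition is stated without a separate proof environment, as the displayed equality is immediate from subtracting the two Neumann series and the inequality is the triangle inequality, with boundedness following from the geometric factor $\gamma^i$. Your write-up simply makes explicit the justifications (convergence of the Neumann series via the contraction property, row-stochasticity of $M^i$ to get the $2\norm{r}_\infty/(1-\gamma)$ bound) that the paper leaves to the reader.
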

\begin{remark}
Let $P^\infty$ denote a matrix where columns consist of the stationary distribution $\mu$.
By the property of reversal Markov chains and lemma \ref{cor:same_stationary_dist}, we have that $\lim_{i\rightarrow\infty} \Vert P^i r-P^\infty r \Vert \rightarrow 0$ and $\lim_{i\rightarrow\infty} \Vert ((1-\param)P+\param \widetilde{P})^i r-P^\infty r \Vert \rightarrow 0$, such that the Marvov chain $P$ and its reversal $(1-\param)P+\param \widetilde{P}$ converge to the same value. Therefore, the norm $\Vert  (P^i-((1-\param) P + \param \widetilde{P})^i) r \Vert_p$ also converges to 0 in the limit.
\end{remark}

\begin{remark}
It can be interesting to note that if the chain is reversible, meaning that $P = \widetilde{P}$, then the fixed point of both operators is the same, that is $v^\pol = v_\param^\pol$.
\end{remark}

\paragraph{Discounted average reward case:} The temporally regularized MDP has the same discounted average reward as the original one as it is possible to define the discounted average reward \citep{tsitsiklis2002average} as a function of the stationary distribution $\pi$, the reward vector and $\gamma$ . This leads to the following property (the proof can be found in the appendix).

\begin{proposition}
For a reward vector r, the MDPs defined by the the transition matrices $P$ and $(1-\param) P + \param \widetilde{P}$ have the same average reward $\rho$.
\end{proposition}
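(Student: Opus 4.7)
The plan is to reduce the claim directly to Lemma~\ref{cor:same_stationary_dist} by writing the discounted average reward as a functional that depends on the transition kernel only through its stationary distribution. Concretely, I would follow \citet{tsitsiklis2002average} and define, for a Markov chain with transition matrix $Q$ and stationary distribution $\mu_Q$, the discounted average reward as
\begin{equation}
    \rho(Q) \;=\; \lim_{T\to\infty} \frac{1}{T}\, \expect_{s_0\sim\mu_Q}\!\left[\sum_{t=0}^{T-1} \gamma^t r(s_t)\right],
\end{equation}
or equivalently, using the stationarity $\mu_Q^\top Q^t = \mu_Q^\top$ for all $t\ge 0$,
\begin{equation}
    \rho(Q) \;=\; \mu_Q^\top \sum_{t=0}^{\infty} \gamma^t Q^t r \;=\; \sum_{t=0}^{\infty} \gamma^t\, \mu_Q^\top r \;=\; \frac{1}{1-\gamma}\,\mu_Q^\top r.
\end{equation}
This expresses $\rho(Q)$ purely as a function of $\mu_Q$, $r$, and $\gamma$, and is independent of the particular kernel $Q$ beyond its stationary distribution.

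Next I would instantiate this formula twice: once with $Q = P$ and once with $Q = (1-\param)P + \param \widetilde{P}$. By Lemma~\ref{cor:same_stationary_dist}, both chains share the same stationary distribution $\mu$. Plugging in gives
\begin{equation}
    \rho(P) \;=\; \frac{1}{1-\gamma}\,\mu^\top r \;=\; \rho\big((1-\param)P + \param \widetilde{P}\big),
\end{equation}
which is exactly the claim. The proof therefore reduces to citing the stationarity identity $\mu^\top Q^t = \mu^\top$ together with Lemma~\ref{cor:same_stationary_dist}.

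There is no genuine obstacle here; the only subtlety worth flagging is that the simplification $\sum_t \gamma^t \mu^\top Q^t r = \frac{\mu^\top r}{1-\gamma}$ requires starting the expectation in the stationary distribution (so the formula is an average, not a state-dependent value). If instead one wished to work with a non-stationary initial distribution, an extra step invoking geometric mixing of both chains to $\mu$ would be needed, but under Assumption~1 (ergodicity) this still yields the same limit for $P$ and for $(1-\param)P + \param \widetilde{P}$, so the conclusion is unchanged.
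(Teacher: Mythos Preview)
Your argument is essentially the paper's own: express the discounted average reward purely as a function of the stationary distribution, the reward vector, and $\gamma$, then invoke Lemma~\ref{cor:same_stationary_dist} to conclude both chains give the same value. One small slip to fix: your first displayed definition, $\rho(Q)=\lim_{T\to\infty}\tfrac{1}{T}\,\expect_{s_0\sim\mu_Q}\big[\sum_{t=0}^{T-1}\gamma^t r(s_t)\big]$, evaluates to $0$ for $\gamma<1$ because the discounted sum converges while you divide by $T$; drop the $\tfrac{1}{T}$ (or the limit in $T$) and keep only your second expression $\rho(Q)=\tfrac{1}{1-\gamma}\,\mu_Q^\top r$, on which the rest of your proof correctly relies.
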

Intuitively, this means that temporal regularization only reweighs the reward on each state based on the Markov reversal, while preserving the average reward.

%COMMENT(pierre-luc): You could also call this section "Higher Order Temporal Regularization". In RLS/ARMA/signal processing, I think they refer to the "order" of their filter when they incorporate older information. See LTI filters.
\paragraph{Temporal Regularization as a time series prediction problem:}
It is possible to cast this problem of temporal regularization as a time series prediction problem, and use richer models of temporal dependencies, such as exponential smoothing \citep{gardner2006exponential}, ARMA model~\citep{box94}, etc. We can write the update in a general form using $n$ different regularizers ($\widetilde{v_0},\widetilde{v_1}...\widetilde{v_{n-1}}$):
\begin{equation}
    \V(s_t) = r(s) + \gamma \sum_{i=0}^{n-1} [\param(i) \widetilde{\V}_i(s_{t+1})],
\end{equation}
where $\widetilde{\V}_0(s_{t+1}) = \V(s_{t+1})$ and $\sum_{i=0}^{n-1} \param(i) = 1$. For example, using exponential smoothing where $\widetilde{\V}(s_{t+1}) = (1-\lambda) \V(s_{t-1}) + (1-\lambda)\lambda \V(s_{t-2})...$, the update can be written in operator form as:
\begin{equation}
    \label{eq:exp_smooth}
    \regT v = r + \gamma \bigg(\left(1-\param\right) Pv + \param \left(1-\lambda\right) \sum_{i=1}^{\infty} \lambda^{i-1} \widetilde{P}^i v\bigg),
\end{equation}
and a similar argument as Theorem 1 can be used to show the contraction property. The bias of exponential smoothing in policy evaluation can be characterized as:
\begin{equation}
        \norm{v^{\pol} - v^{\pol}_{\param}}_{\infty} \leq  \sum_{i=0}^{\infty} \gamma^i \norm{ (P^i-((1-\param) P + \param (1-\lambda) \sum_{j=1}^{\infty} \lambda^{j-1} \widetilde{P}^j)^i) r}_{\infty}.
\end{equation}

Using more powerful regularizers could be beneficial, for example to reduce variance by smoothing over more values (exponential smoothing) or to model the trend of the value function through the trajectory using trend adjusted model ~\cite{gardner1985exponential}. An example of a temporal policy evaluation with temporal regularization using exponential smoothing is provided in Algorithm \ref{alg:pol_eval_exp_smoothing}.
\begin{algorithm}[H]
\caption{Policy evaluation with exponential smoothing}
\begin{spacing}{1.2}
\begin{algorithmic}[1]
    \STATE Input: $\pi,\alpha,\gamma,\param,\lambda$
    \STATE $p = \V(s)$
    \FORALL{steps}
        \STATE Choose $a \sim \pi(s)$
        \STATE Take action $a$, observe reward $r(s)$ and next state $s'$
        \STATE $\V(s) = \V(s) +  \alpha (r(s) + \gamma ( (1-\param) \V(s') + \param p) - v(s)) $
        \STATE $p = (1-\lambda)\V(s) + \lambda p$
    \ENDFOR
\end{algorithmic}
\end{spacing}
\label{alg:pol_eval_exp_smoothing}
\end{algorithm}

%COMMENT(pierre-luc) : It feels too disconnected from the flow. I would just move this to the conclusion as future work. 
\paragraph{Control case:} 
Temporal regularization can be extended to MDPs with actions by  modifying the target of the value function (or the Q values) using temporal regularization. Experiments (Sec.~\ref{sec:expe:drl}) present an example of how temporal regularization can be applied within an actor-critic framework. The theoretical analysis of the control case is outside the scope of this paper.

\paragraph{Temporal difference with function approximation:}
It is also possible to extend temporal regularization using function approximation such as  semi-gradient TD \cite{sutton2017reinforcement}. 
%COMMENT(pierre-luc): Semi-gradient is a new terminology introduced in the draft of the new S&B textbook. Not found in 1998
Assuming a function $\V_{\theta}^{\param}$ parameterized by $\theta$, we can consider $r(s) + \gamma ((1-\param)\V_{\theta}^{\param}(s_{t+1}) + \param \V_{\theta}^{\param}(s_{t-1})) - \V_{\theta}^{\param}(s_t)$  as the target and differentiate with respect to $\V_{\theta}^{\param}(s_{t})$. An example of a temporally regularized semi-gradient TD algorithm can be found in the appendix.
%COMMENT(pierre-luc) : I would include this in the main text + pseudo-code. Important to show that the MDP formulation can be efficiently translated into a usable model-free algorithm. The time-reversibility would be a concern for me if I were to review this paper and it hadn't be addressed by this point yet. 

\section{Experiment}
We now presents empirical results illustrating potential advantages of temporal regularization, and characterizing its bias and variance effects on value estimation and control.

\subsection{Mixing time}
This first experiment showcases that the underlying Markov chain of a MDP can have a smaller mixing time when temporally regularized. The mixing time can be seen as the number of time steps required for the Markov chain to get \emph{close enough} to its stationary distribution. Therefore, the mixing time also determines the rate at which policy evaluation will converge to the optimal value function~\citep{baxter2001infinite}. 
%COMMENT(pierre-luc) : can also cite Puterman, spectral radius. Also, not just for control (you say "optimal") : also true for policy evaluation.
%COMMENT(pierre-luc): Mixing quickly is one thing, but mixing quickly and returning an answer not too far away from the actual answer (bias) is very important. When reading this paragraph, I would need to convince myself that this is not a vacuous demonstration, that the resulting fast-mixing chain is still somewhat related to the original problem.
We consider a synthetic MDP with 10 states where transition probabilities are sampled from the uniform distribution. Let $P^{\infty}$ denote a matrix where columns consists of the stationary distribution $\mu$. To compare the mixing time, we evaluate the error corresponding to the distance of $P^i$ and $\big((1-\param)P+\param \widetilde{P}\big)^i$ to the convergence point $P^{\infty}$ after $i$ iterations.
Figure~\ref{fig:mixing} displays the error curve when varying the regularization parameter $\param$. We observe a U-shaped error curve, that intermediate values of $\param$ in this example yields faster mixing time.
One explanation is that transition matrices with extreme probabilities (low or high) yield poorly conditioned transition matrices. Regularizing with the reversal Markov chain often leads to a better conditioned matrix at the cost of injecting bias.
%COMMENT(pierre-luc) : poorly conditioned: you should compute the actual condition numbers to convince the reader that this is actually what is happening. 

\subsection{Bias}
\label{sec:expe:bias}

It is well known that reducing variance comes at the expense of inducing (smaller) bias.
%COMMENT(pierre-luc): citation still needed. You can perhaps cite general ML textbooks such as Hastie ESL or Bishop. You can also cite  Bertsekas (2012) and Satinder Singh and Kearns for bias-variance of TD(lambda).
This has been characterized previously (Sec.~\ref{sec:temp_reg}) in terms of the difference between the original Markov chain and the reversal weighted by the reward. In this experiment, we attempt to give an intuitive idea of what this means. More specifically, we would expect the bias to be small if values along the trajectories have similar values.
To this end, we consider a synthetic MDP with $10$ states where both transition functions and rewards are sampled randomly from a uniform distribution. In order to create temporal dependencies in the trajectory, we smooth the rewards of $N$ states that are temporally close (in terms of trajectory) using the following formula: $ r(s_t) = \frac{r(s_t) + r(s_{t+1})}{2}$.
Figure~\ref{fig:mod} shows the difference between the regularized and un-regularized MDPs as $N$ changes, for different values of regularization parameter $\param$.
We observe that increasing $N$, meaning more states get rewards close to one another, results into less bias. This is due to rewards putting emphasis on states where the original and reversal Markov chain are similar.

\begin{figure}
\begin{minipage}[c]{0.45\linewidth}
\includegraphics[height=5cm,width=\linewidth]{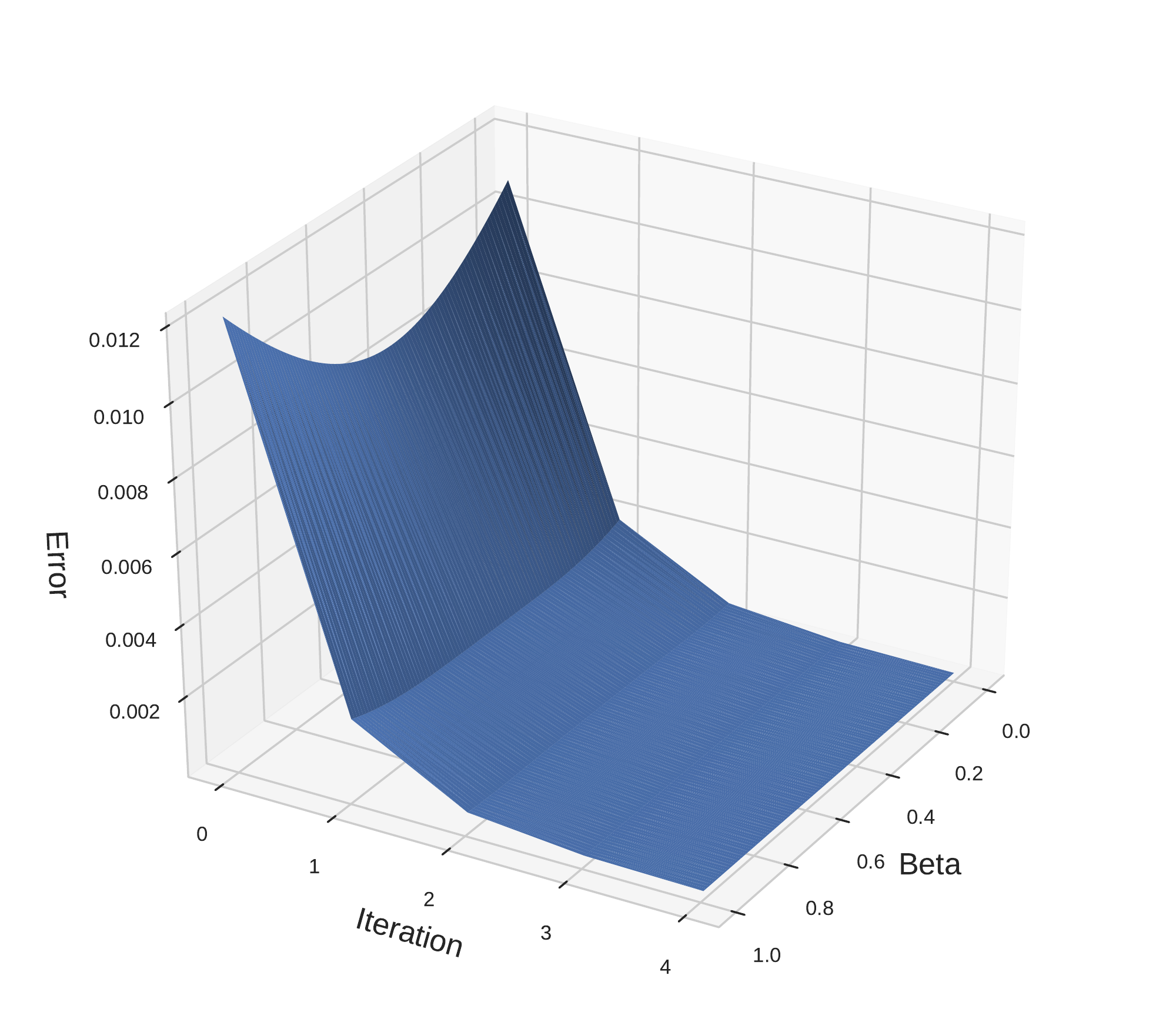}
\caption{Distance between the stationary transition probabilities and the estimated transition probability for different values of regularization parameter $\param$.}
\label{fig:mixing}
\end{minipage}
\hfill
\begin{minipage}[c]{0.45\linewidth}
\includegraphics[height=5.2cm,width=\linewidth]{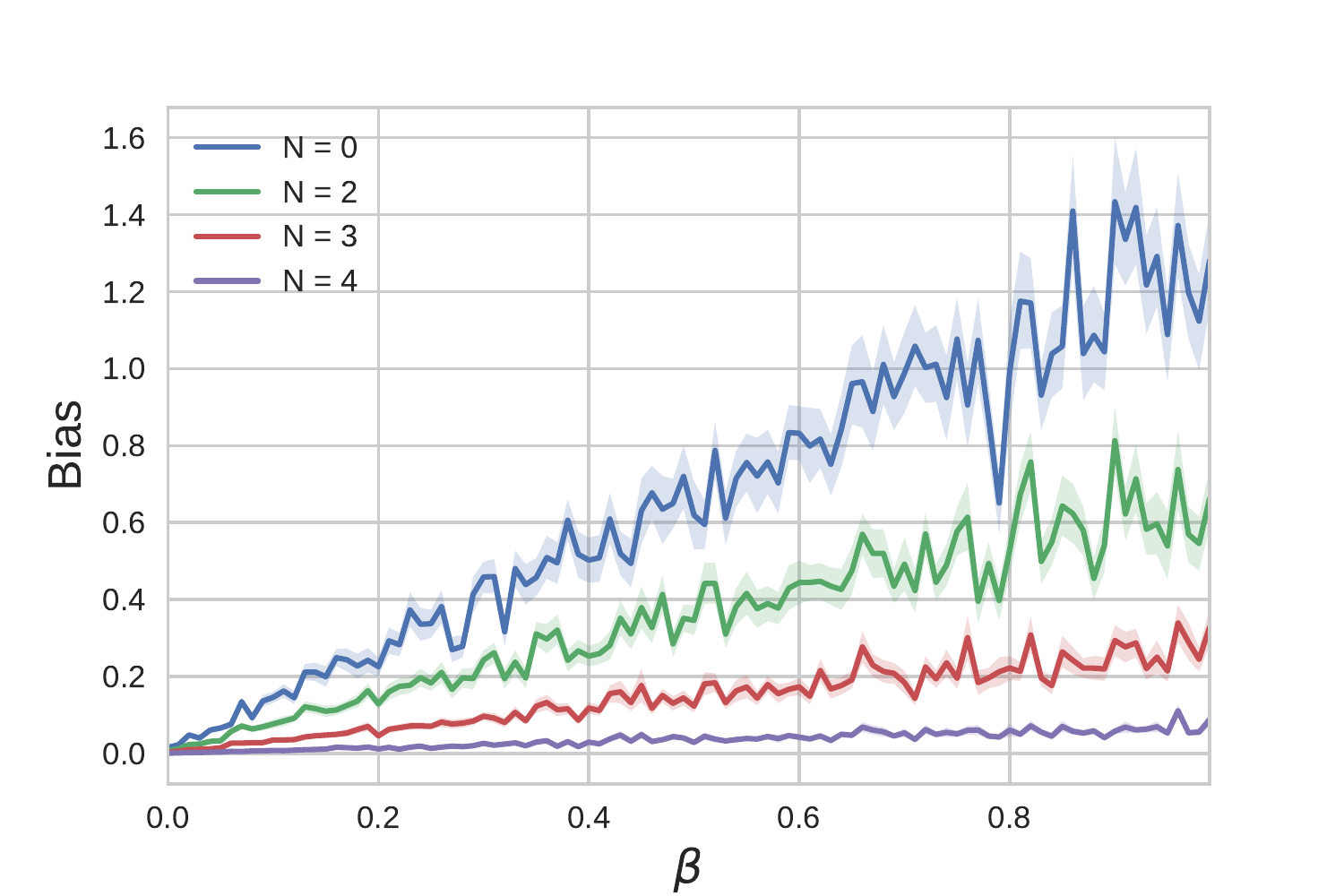}
\caption{Mean difference between $\Vr^{\pi}$ and $\V^{\pi}$ given the regularization parameter $\param$, for different amount of smoothed states $N$.}
\label{fig:mod}
\end{minipage}
\end{figure}

\subsection{Variance}
\label{sec:expe:variance}
% COMMENT(pierre-luc) : this should be emphasized earlier.
The primary motivation of this work is to reduce variance, therefore we now consider an experiment  targeting this aspect. Figure~\ref{fig:MDP} shows an example of a synthetic, 3-state MDP, where the variance of $S_1$ is (relatively) high. We consider an agent that is evolving in this world, changing states following the stochastic policy indicated. We are interested in the error when estimating the optimal state value of $S_1$, $\V^*(S_1)$, with and without temporal regularization, denoted $\Vr^{\pi}(S_1)$, $\V^{\pi}(S_1)$, respectively.

Figure~\ref{fig:perf_MDP} shows these errors at each iteration, averaged over $100$ runs. We observe that temporal regularization indeed reduces the variance and thus helps the learning process by making the value function easier to learn.

\begin{figure}
\begin{minipage}[c]{0.45\linewidth}
\includegraphics[width=\linewidth]{Exp_2.png}
\caption{Synthetic MDP where state $S_1$ has high variance.}
\label{fig:MDP}
\end{minipage}
\hfill
\begin{minipage}[c]{0.45\linewidth}
\includegraphics[width=\linewidth]{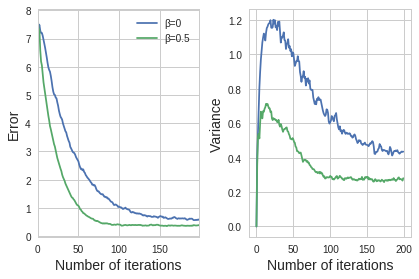}
\caption{Left plot shows absolute difference between original ($\V^{\pi}(S_1)$) and regularized ($\Vr^{\pi}(S_1)$) state value estimates to the optimal value $\V^*(S_1)$. Right plot shows the variance of the estimates $\V$.}
\label{fig:perf_MDP}
\end{minipage}%
\end{figure}

\subsection{Propagation of the information}
We now illustrate with a simple experiment how temporal regularization allows the information to spread faster among the different states of the MDP. For this purpose, we consider a simple MDP, where an agent walks randomly in two rooms (18 states) using four actions (up, down, left, right), and a discount factor $\gamma=0.9$. The reward is $r_t=1$ everywhere and passing the door between rooms (shown in red on Figure~\ref{fig:room}) only happens 50\% of the time (on attempt). The episode starts at the top left and terminates when the agent reaches the bottom right corner. The sole goal is to learn the optimal value function by walking along this MDP (this is not a race toward the end).

Figure~\ref{fig:room} shows the proximity of the estimated state value to the optimal value with and without temporal regularization. The darker the state, the closer it is to its optimal value. The heatmap scale has been adjusted at each trajectory to observe the difference between both methods.
We first notice that the overall propagation of the information in the regularized MDP is faster than in the original one. We also observe that, when first entering the second room, bootstrapping on values coming from the first room allows the agent to learn the optimal value faster. This suggest that temporal regularization could help agents explore faster by using their prior from the previous visited state for learning the corresponding optimal value faster. It is also possible to consider more complex and powerful regularizers. Let us study a different time series prediction model, namely exponential averaging, as defined in (\ref{eq:exp_smooth}). The complexity of such models is usually articulated by hyper-parameters, allowing complex models to improve performance by better adapting to problems. We illustrate this by comparing the performance of regularization using the previous state and an exponential averaging of all previous states. Fig.~\ref{fig:room_perf} shows the average error on the value estimate using past state smoothing, exponential smoothing, and without smoothing. In this setting, exponential smoothing transfers information faster, thus enabling faster convergence to the true value.

\begin{figure}
    \centering
    \includegraphics[scale=0.44]{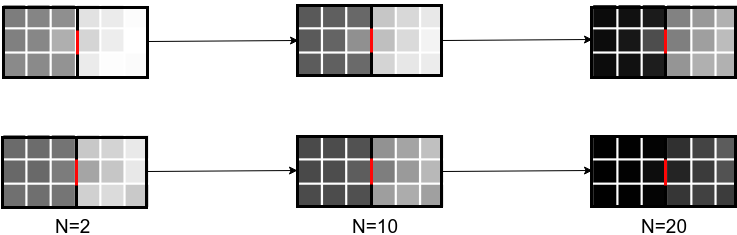}
    \caption{Proximity of the estimated state value to the optimal value after $N$ trajectories. Top row is the original room environment and bottom row is the regularized one ($\param = 0.5$). Darker is better.}
    \label{fig:room}
\end{figure}
\begin{figure}[H]
\centering
\includegraphics[scale=0.60]{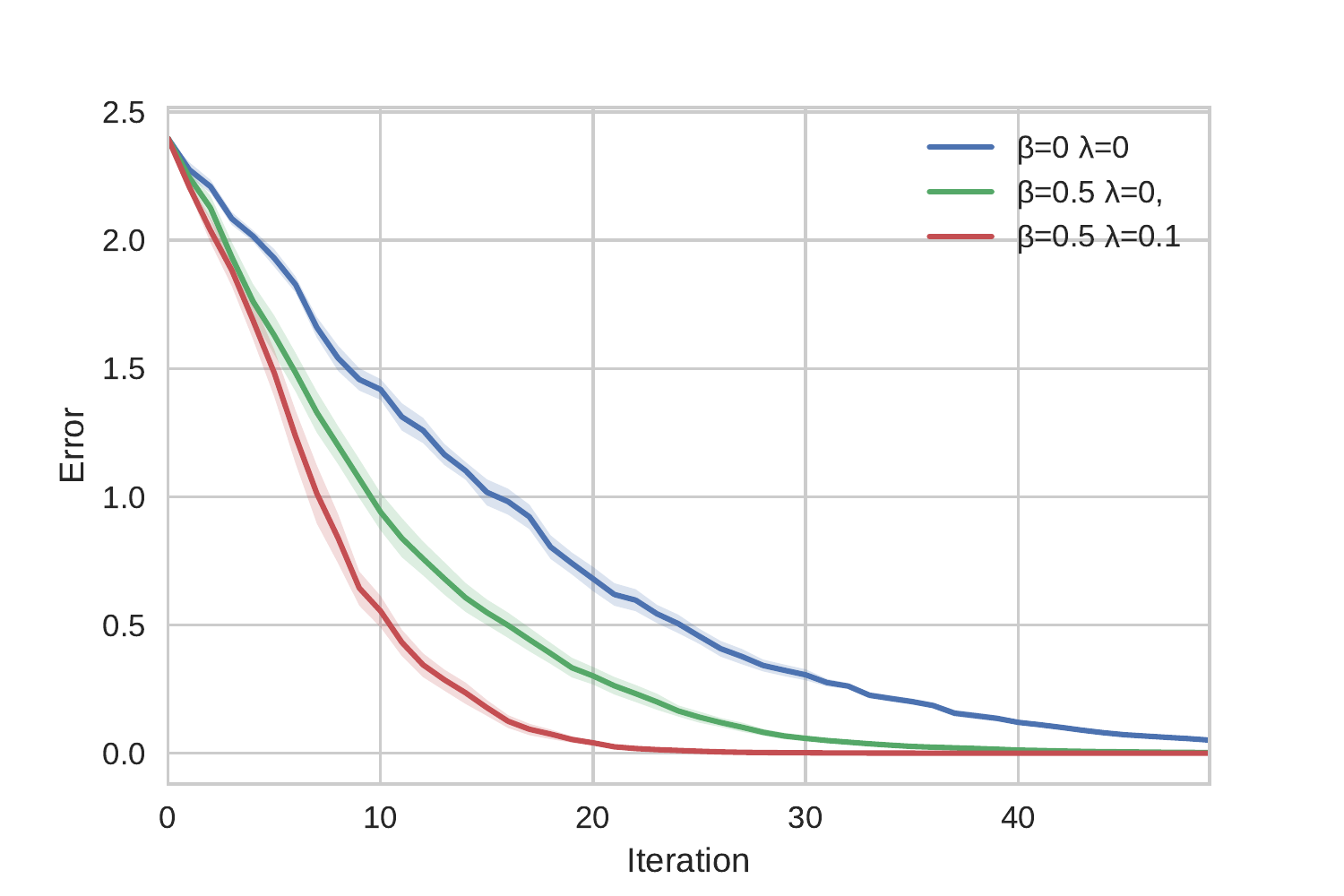}
\caption{Benefits of complex regularizers on the room domain.}
\label{fig:room_perf}
\end{figure}

\subsection{Noisy state representation}
\label{sec:expe:noisy_state}

The next experiment illustrates a major strength of temporal regularization, that is its robustness to noise in the state representation. This situation can naturally arise when the state sensors are noisy or insufficient to avoid aliasing. For this task, we consider the synthetic, one dimensional, continuous setting. A learner evolving in this environment walks randomly along this line with a discount factor $\gamma=0.95$. Let $x_t\in [0,1]$ denote the position of the agent along the line at time $t$. The next position $x_{t+1} = x_t + a_t$, where action $a_t\sim\mathcal{N}(0, 0.05)$. The state of the agent corresponds to the position perturbed by a zero-centered Gaussian noise $\epsilon_t$, such that $s_t = x_t + \epsilon_t$, where $\epsilon_t\sim\mathcal{N}(0,\sigma^2)$ are i.i.d. When the agent moves to a new position $x_{t+1}$, it receives a reward $r_t = x_{t+1}$. The episode ends after 1000 steps. In this experiment we model the value function using a linear model with a single parameter $\theta$. We are interested in the error when estimating the optimal parameter function $\theta^*$ with and without temporal regularization, that is $\theta^\pi_\param$ and $\theta^\pi$, respectively. In this case we use the TD version of temporal regularization presented at the end of Sec.~\ref{sec:temp_reg}.
Figure~\ref{fig:perf_MDP_noisy} shows these errors, averaged over 1000 repetitions, for different values of noise variance $\sigma^2$. We observe that as the noise variance increases, the un-regularized estimate becomes less accurate, while temporal regularization is more robust. Using more complex regularizer can improve performance as shown in the previous section but this potential gain comes at the price of a potential loss in case of model misfit. Fig.~\ref{fig:smoothing} shows the absolute distance from the regularized state estimate (using exponential smoothing) to the optimal value while varying $\lambda$ (higher $\lambda$ = more smoothing). Increasing smoothing improves performance up to some point, but when $\lambda$ is not well fit the bias becomes too strong and performance declines. This is a classic bias-variance tradeoff. This experiment highlights a case where temporal regularization is effective even in the absence of smoothness in the state space (which other regularization methods would target). This is further highlighted in the next experiments.

\begin{figure}
\begin{minipage}[c]{0.45\linewidth}
\centering
\includegraphics[height=4.5cm,width=\linewidth]{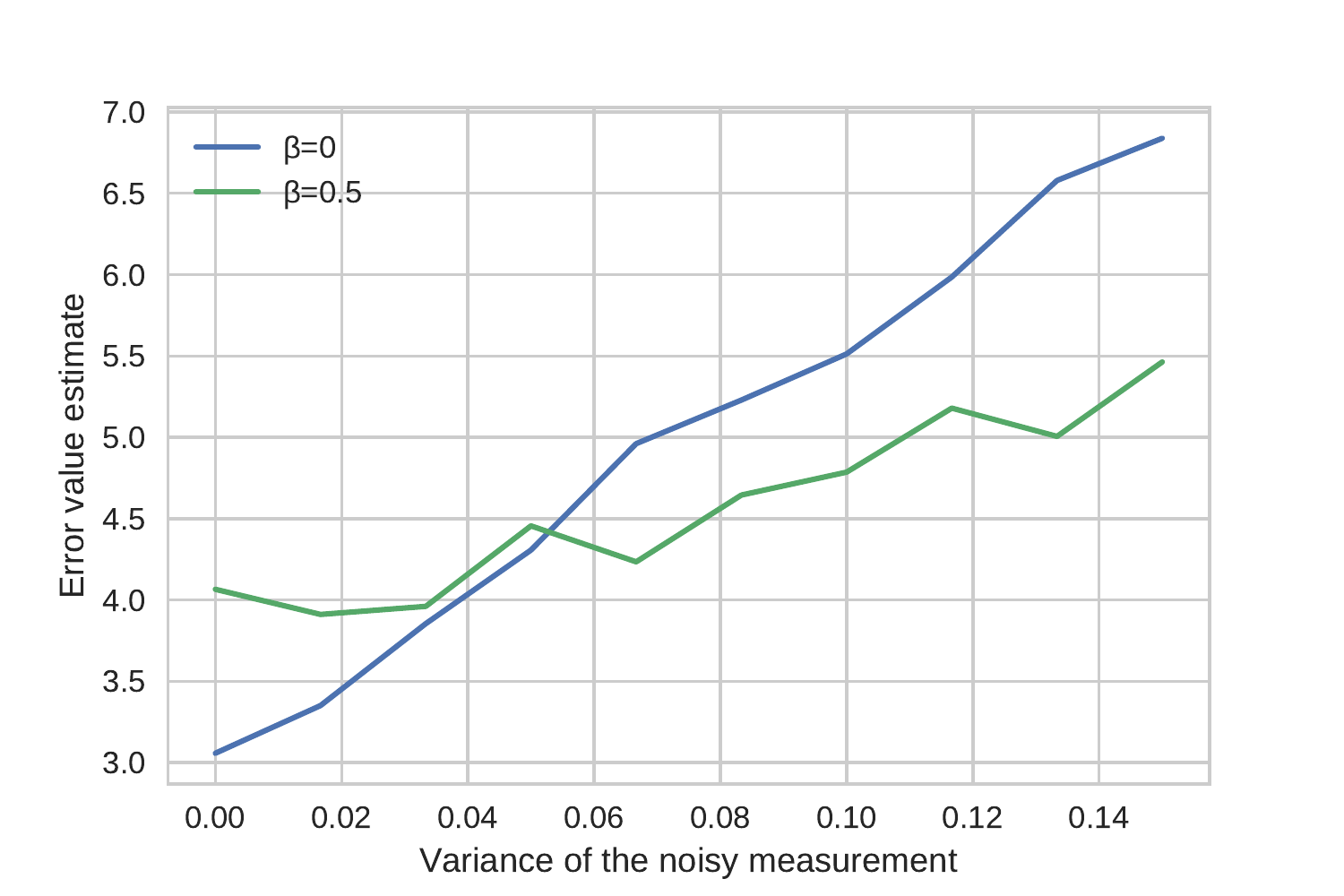}
\caption{Absolute distance from the original ( $\theta^\pi$) and the regularized ($\theta^\pi_\param$) state value estimates to the optimal parameter $\theta^*$ given the noise variance $\sigma^2$ in state sensors.}
\label{fig:perf_MDP_noisy}
\end{minipage}
\hfill
\begin{minipage}[c]{0.45\linewidth}
\centering
\includegraphics[height=4.5cm,width=\linewidth]{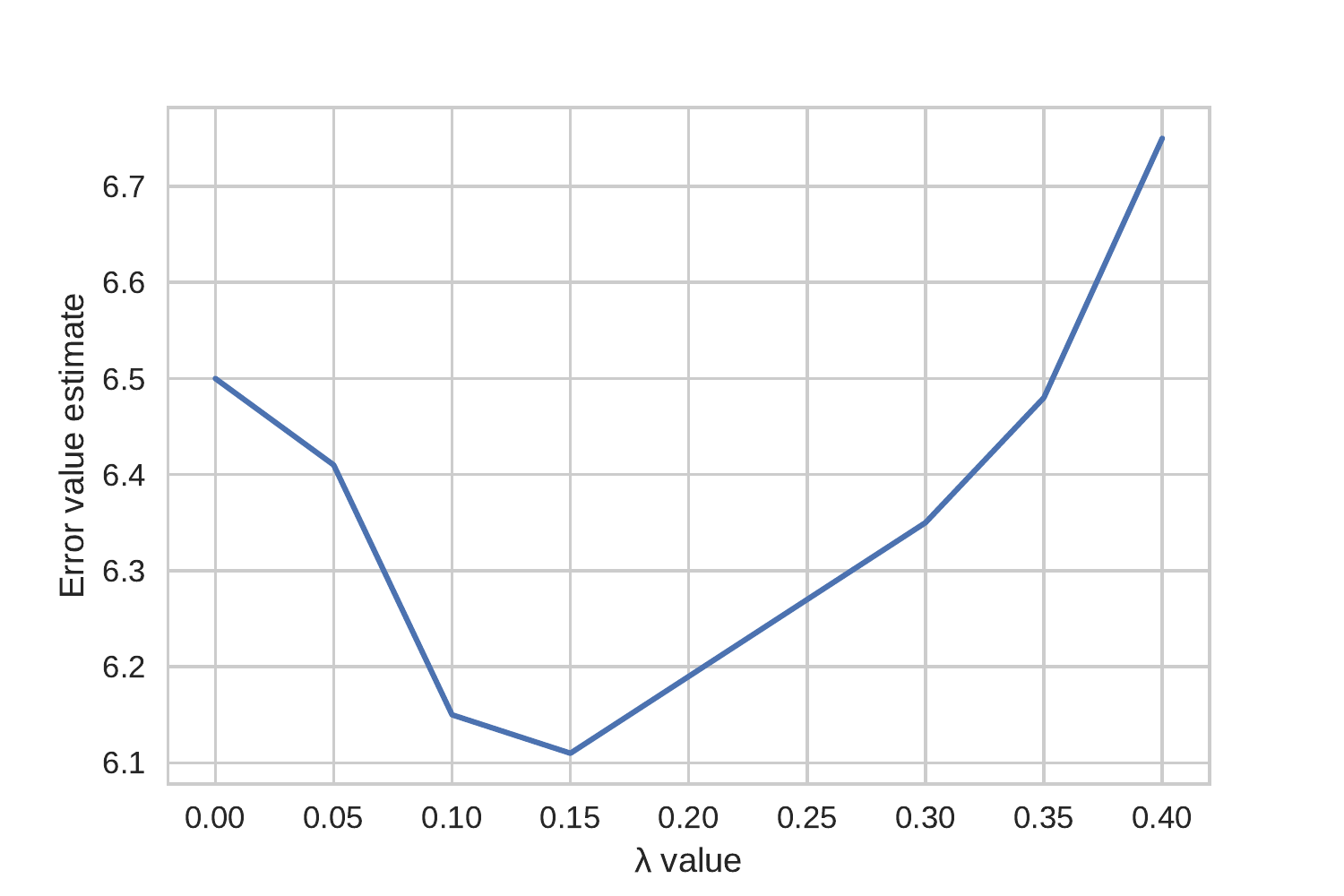}
\caption{Impact of complex regularizer parameterization ($\lambda$) on the noisy walk using exponential smoothing.}
\label{fig:smoothing}
\end{minipage}%
\end{figure}

\subsection{Deep reinforcement learning}
\label{sec:expe:drl}

To showcase the potential of temporal regularization in high dimensional settings, we adapt an actor-critic based method (PPO \citep{schulman2017proximal}) using temporal regularization. More specifically, we incorporate temporal regularization as exponential smoothing in the target of the critic.  PPO uses the general advantage estimator $\hat{A}_t = \delta_t + \gamma \lambda \delta_{t+1} + ... + (\gamma \lambda)^{T-t+1} \delta_{T}$ where $\delta_t = r_t + \gamma v(s_{t+1}) - v(s_{t})$. We regularize $\delta_t$ such that $\delta_t^{\beta} = r_t + \gamma ((1-\beta)v(s_{t+1}) + \beta \widetilde{v}(s_{t-1}))) - v(s_{t})$ using exponential smoothing $\widetilde{v}(s_{t}) = (1-\lambda)v(s_t) + \lambda \widetilde{v}(s_{t-1})$ as described in Eq.~(\ref{eq:exp_smooth}). $\widetilde{v}$ is an exponentially decaying sum over all $t$ previous state values encountered in the trajectory.
We evaluate the performance in the Arcade Learning Environment ~\cite{bellemare2013arcade}, where we consider the following performance measure:
\begin{equation}
\label{eqn:expe:drl_measure}
    \frac{\text{regularized} - \text{baseline}}{\text{baseline} - \text{random}}.
\end{equation}
%COMMENT(pierre-luc) : what is "baseline", plain PPO that you re-ran or other reported algorithm ?
The hyper-parameters for the temporal regularization are $\param = \lambda = 0.2$ and a decay of $1\mathrm{e}^{-5}$. Those are selected on 7 games and 3 training seeds. All other hyper-parameters correspond to the one used in the PPO paper. Our implementation\footnote{The code can be found \url{https://github.com/pierthodo/temporal_regularization}.} is based on the publicly available OpenAI codebase~\cite{baselines}. The previous four frames are considered as the state representation~\citep{mnih2015human}.
For each game, $10$ independent runs ($10$ random seeds) are performed.

The results reported in Figure~\ref{fig:graph_atari} show that adding temporal regularization improves the performance on multiple games. This suggests that the regularized optimal value function may be smoother and thus easier to learn, even when using function approximation with deep learning. Also, as shown in previous experiments (Sec.~\ref{sec:expe:noisy_state}), temporal regularization being independent of spatial representation makes it more robust to mis-specification of the state features, which is a challenge in some of these games (e.g. when assuming full state representation using some previous frames).

\begin{figure}
    \centering
    \includegraphics[width=13cm,height=4cm]{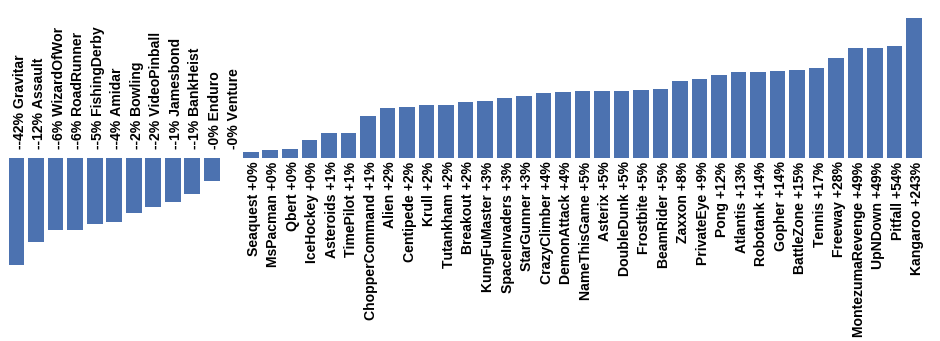}
    \caption{Performance (Eq.~\ref{eqn:expe:drl_measure}) of a temporally regularized PPO on a suite of Atari games.}
    \label{fig:graph_atari}
\end{figure}

\section{Discussion}

\paragraph{Noisy states:}
%COMMENT(pierre-luc): Get rid of "people". Instead: "It is often assumed that[...]" + citation to support "often". 
Is is often assumed that the full state can be determined, while in practice, the Markov property rarely holds. This is the case, for example, when taking the four last frames to represent the state in Atari games~\citep{mnih2015human}. 
%COMMENT(pierre-luc) : last four frame, cite DQN paper where this was first introduced.
A problem that arises when treating a partially observable MDP (POMDP) as a fully observable is that it may no longer be possible to assume that the value function is smooth over the state space~\citep{singh1994learning}.
%COMMENT(pierre-luc): Need citation here. Perhaps "Learning Without State-Estimation in Partially Observable Markovian Decision Processes" by Singh 1994 for general phenomenon of "aliasing" when using function approximation. 
For example, the observed features may be similar for two states that are intrinsically different, leading to highly different values for states that are nearby in the state space. Previous experiments on noisy state representation (Sec.~\ref{sec:expe:noisy_state}) and on the Atari games (Sec.~\ref{sec:expe:drl}) show that temporal regularization provides robustness to those cases. This makes it an appealing technique in real-world environments, where it is harder to provide the agent with the full state.
% COMMENT(pierre-luc) : the paper "On the Role of Tracking in Stationary Environments" by Sutton, Koop, Silver (2009) is related to this idea of "temporal smoothing". They develop the idea that "tracking" can be useful under partial observability (extremenly improverished features). In fact, their "Black and white" world would have been a nice experiment to try in this paper.
% COMMENT(pierre-luc) two other related ideas about "temporal smoothing. Options : I showed in the AAAI (2017) and in my thesis that Markov options + call return can be interpreted mathematically as a special mixture model where P(o' | s', o) = (1 - B(s',o))1_{o'=o} + B(s',o)mu(o'|s'). It's also a form of "temporal smoothing" where (1-B) determines how much you want to bias your distribution over options given by mu. The more you "continue" (B goes to 0), the more you "commit" and the more you "smooth". 
% COMMENT(pierre-luc) : Another related idea is from Marc and his " Persistent Q-Learning" algorithm in "Increasing the Action Gap: New Operators for Reinforcement Learning". Very similar structure as Markov options (but MDP setting). 
\paragraph{Choice of the regularization parameter:}

The bias induced by the regularization parameter $\param$ can be detrimental for the learning in the long run. A first attempt to mitigate this bias is just to decay the regularization as learning advances, as it is done in the deep learning experiment (Sec.~\ref{sec:expe:drl}). Among different avenues that could be explored, an interesting one could be to aim for a state dependent regularization. For example, in the tabular case, one could consider $\param$ as a function of the number of visits to a particular state.
% COMMENT(pierre-luc): could perhaps be cast as dual averaging. See Neu paper
\paragraph{Smoother objective:}

%COMMENT(pierre-luc) : "shape" is vague
Previous work~\cite{smooth_value} looked at how the smoothness of the objective function relates to the convergence speed of RL algorithms.  An analogy can be drawn with convex optimization where the rate of convergence is dependent on the Lipschitz (smoothness) constant ~\cite{boyd2004convex}. 
By smoothing the value temporally we argue that the optimal value function can be smoother. This would be beneficial in high-dimensional state space where the use of deep neural network is required. This could explain the performance displayed using temporal regularization on Atari games (Sec.~\ref{sec:expe:drl}).
The notion of temporal regularization is also behind multi-step methods~\citep{sutton1998reinforcement}; it may be worthwhile to further explore how these methods are related.

\paragraph{Conclusion:}

This paper tackles the problem of regularization in RL from a new angle, that is from a temporal perspective. In contrast with typical spatial regularization, where one assumes that rewards are close for nearby states in the state space, temporal regularization rather assumes that rewards are close for states \emph{visited closely in time}. This approach allows information to propagate faster into states that are hard to reach, which could prove useful for exploration. The robustness of the proposed approach to noisy state representations and its interesting properties should motivate further work to explore novel ways of exploiting temporal information.

\subsubsection*{Acknowledgments}
The authors wish to thank Pierre-Luc Bacon, Harsh Satija and Joshua Romoff  for helpful discussions. Financial support was provided by NSERC and Facebook. This research was enabled by support provided by Compute Canada. We thank the reviewers for insightful comments and suggestions.

\clearpage

\bibliographystyle{abbrvnat}

\clearpage
\section{Appendix}
\setcounter{lemma}{0}
\setcounter{property}{1}

\begin{lemma}
$P$ and $(1-\param) P + \param \widetilde{P}$ have the same stationary distribution $\mu \quad \forall \param \in [0,1]$.
\end{lemma}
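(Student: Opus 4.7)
The plan is to prove stationarity of $\mu$ for the mixture chain by separately verifying it for each summand and then invoking linearity.

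First, I would recall what needs to be shown: $\mu$ is stationary for a row-stochastic matrix $Q$ iff $\mu^\top Q = \mu^\top$, i.e.\ $\sum_i \mu_i Q_{ij} = \mu_j$ for all $j$. By hypothesis, $\mu$ is the stationary distribution of $P$, so $\sum_i \mu_i P_{ij} = \mu_j$ holds. The excerpt's definition of the reversal chain also asserts that $\widetilde{P}$ is irreducible with the same invariant distribution $\mu$, but since the main theorem statement quantifies over all $\beta\in[0,1]$ (including $\beta=1$, where the mixture becomes $\widetilde{P}$), I would give a self-contained verification of this fact rather than just cite it.

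The key computation is short: using the definition $\widetilde{P}_{ij} = \mu_j P_{ji}/\mu_i$, I would compute
\begin{equation}
\sum_i \mu_i \widetilde{P}_{ij} \;=\; \sum_i \mu_i \cdot \frac{\mu_j P_{ji}}{\mu_i} \;=\; \mu_j \sum_i P_{ji} \;=\; \mu_j,
\end{equation}
where the last equality uses that $P$ is row-stochastic (rows sum to $1$). Ergodicity of $P$ (Assumption~1) ensures $\mu_i>0$ for every $i$, so the division by $\mu_i$ in the definition of $\widetilde{P}$ is valid.

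Finally, I would combine the two identities by linearity: for any $\beta\in[0,1]$,
\begin{equation}
\mu^\top\bigl((1-\beta)P + \beta\widetilde{P}\bigr) \;=\; (1-\beta)\,\mu^\top P + \beta\,\mu^\top \widetilde{P} \;=\; (1-\beta)\mu^\top + \beta\mu^\top \;=\; \mu^\top.
\end{equation}
Since the mixture is row-stochastic (as noted in the proof of Theorem~1) and inherits irreducibility from $P$ on the support of $\mu$, its stationary distribution is unique and equals $\mu$. I expect no real obstacle here: the only subtle point is making sure $\mu_i>0$ so that $\widetilde{P}$ is well-defined, which is immediate from ergodicity.
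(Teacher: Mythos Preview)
Your proof is correct and follows essentially the same approach as the paper: verify that $\mu$ is stationary for both $P$ and $\widetilde{P}$ separately, then combine via linearity. The only difference is that the paper simply cites the fact that $\widetilde{P}$ shares the stationary distribution $\mu$ (from the definition of the reversal chain), whereas you give the short self-contained verification; your version is slightly more complete but not a different route.
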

\begin{proof}
It is known that $P^{\pol}$ and $\widetilde{P^{\pol}}$ have the same stationary distribution. Using this fact we have that
\begin{equation}
    \begin{split}
        \mu( (1-\param) P^{\pol} + \param \widetilde{P^{\pol}}) &=  (1-\param) \mu P^{\pol} + \param \mu \widetilde{P^{\pol}} \\
        &= (1-\param)\mu + \param \mu\\
        &= \mu.
    \end{split}
\end{equation}
\end{proof} 

\begin{property}
For a reward vector r, the MDP defined by the the transition matrix $P$ and $(1-\param) P + \param \widetilde{P}$ have the same discounted average reward $\rho$.
\begin{equation}
    \frac{\rho}{1-\gamma} = \sum_i^{\infty}\gamma^i \pi^T r.
\end{equation}
\end{property}
\begin{proof}
Using lemma 1, both $P$ and $(1-\param) P + \param \widetilde{P}$ have the same stationary distribution and so discounted average reward.
\end{proof}

\begin{lemma}
\label{lem:stochastic_matrix_convex_combination}
The convex combination of two row stochastic matrices is also row stochastic.
\end{lemma}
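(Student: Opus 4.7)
The plan is to unpack the definition of row stochastic and verify the two defining conditions (non-negativity of entries and unit row sums) are preserved under convex combination. Let $A, B \in \R^{n \times n}$ be row stochastic and let $\alpha \in [0,1]$; I will work with the matrix $M = \alpha A + (1-\alpha) B$ entrywise.

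First I would check non-negativity. Since $A_{ij} \geq 0$ and $B_{ij} \geq 0$ for all $i,j$, and since $\alpha, 1-\alpha \geq 0$, the entry $M_{ij} = \alpha A_{ij} + (1-\alpha) B_{ij}$ is a sum of non-negative numbers, hence non-negative. Second I would check that each row of $M$ sums to one by summing over $j$ and splitting the sum using linearity:
\begin{equation}
\sum_{j=1}^n M_{ij} = \alpha \sum_{j=1}^n A_{ij} + (1-\alpha) \sum_{j=1}^n B_{ij} = \alpha \cdot 1 + (1-\alpha) \cdot 1 = 1,
\end{equation}
where the penultimate equality uses the row stochasticity of $A$ and $B$.

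These two facts together are exactly the definition of $M$ being row stochastic, completing the proof. There is no real obstacle here; the statement is a direct consequence of linearity and the fact that the set of row stochastic matrices is convex. The only thing worth being careful about is to explicitly invoke $\alpha \in [0,1]$ (as opposed to an arbitrary linear combination) so that the coefficients on $A$ and $B$ are non-negative and sum to one, which is precisely what guarantees both properties are preserved simultaneously.
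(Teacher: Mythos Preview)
Your proof is correct and follows essentially the same approach as the paper: both verify the defining properties of row stochasticity directly via linearity, the paper using the compact form $Me = e$ for the all-ones vector $e$, and you using the equivalent entrywise row sum. Your version is slightly more complete in that you also explicitly check non-negativity of the entries, which the paper leaves implicit.
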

\begin{proof}
Let e be vector a columns vectors of 1.
\begin{equation}
\begin{split}
       (\param P^{\pol} + (1-\param) \widetilde{P^{\pol}})e &= \param P^{\pol}e +  (1-\param) \widetilde{P^{\pol}}e \\
       &= \param e + (1-\param) e\\
       &= e.
\end{split}
\end{equation}
\end{proof}

\begin{algorithm}[H]
\caption{Temporally regularized semi-gradient TD}
\begin{spacing}{1.2}
\begin{algorithmic}[1]
    \STATE Input: policy $\pi$,$\param$,$\gamma$
    \FORALL{steps}
        \STATE Choose $a \sim \pi(s_t)$
        \STATE Take action $a$, observe $r(s),s_{t+1}$
        \STATE $\theta = \theta + \alpha (r + \gamma((1-\param) \V_{\theta}(s_{t+1}) + \param \V_{\theta}(s_{t-1})) - \V_{\theta}(s_t))\nabla \V_{\theta}(s_{t}) $
    \ENDFOR
\end{algorithmic}
\end{spacing}
\end{algorithm}
\begin{figure}
    \centering
    \includegraphics[scale=0.5]{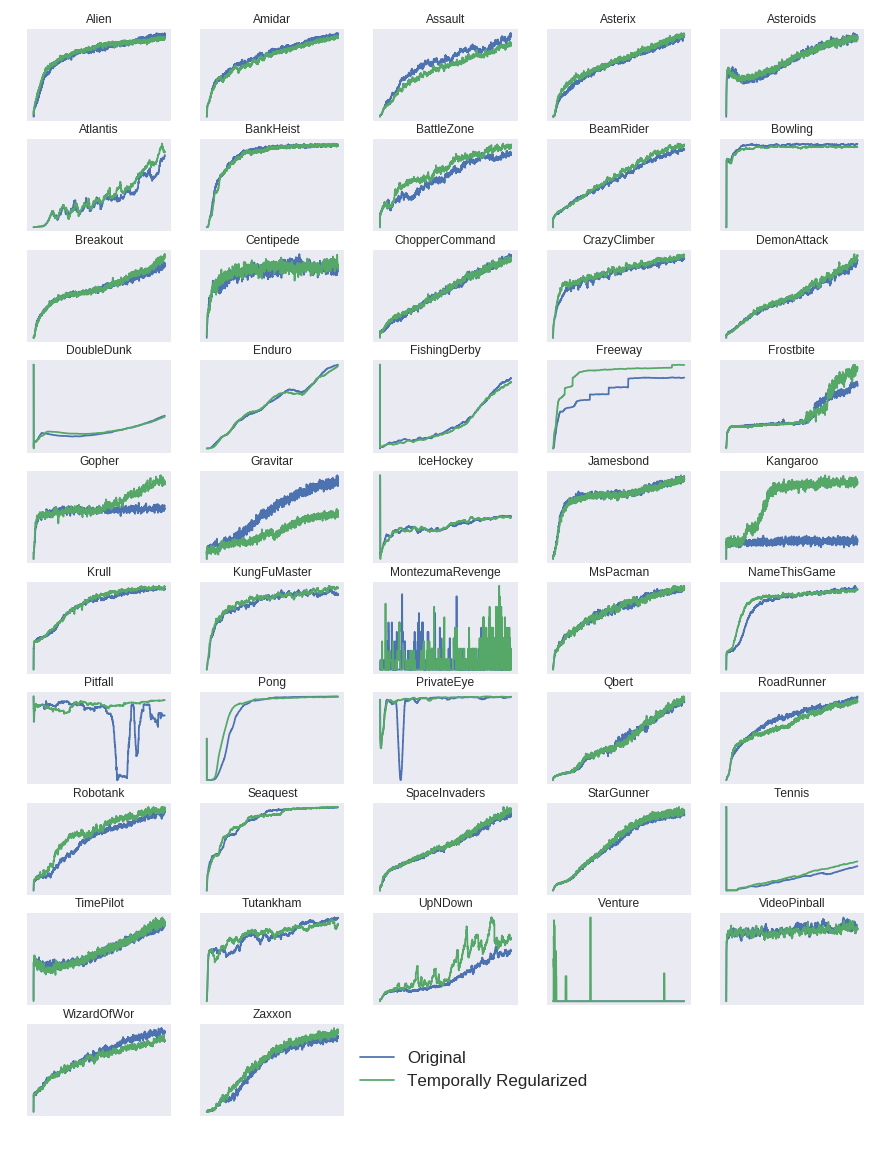}
    \caption{Average reward per episode on Atari games.}
    \label{fig:my_label}
\end{figure}
\end{document}